\documentclass{article}

\usepackage{microtype}
\usepackage{graphicx}
\usepackage{subcaption}
\usepackage{booktabs} 

\usepackage{hyperref}

\usepackage[accepted]{icml2026}

\usepackage{amsmath}
\usepackage{amssymb}
\usepackage{mathtools}
\usepackage{amsthm}
\usepackage{mwe}

\usepackage[capitalize,noabbrev]{cleveref}

\theoremstyle{plain}
\newtheorem{theorem}{Theorem}[section]
\newtheorem{proposition}[theorem]{Proposition}

\theoremstyle{definition}

\theoremstyle{remark}

\newcommand{\hide}[1]{}
\newcommand{\modelname}{$\alpha$-PFN}

\newcommand{\speedupub}{50x}
\DeclareMathOperator*{\argmax}{arg\,max}

\DeclareMathOperator{\E}{\mathbb{E}}

\usepackage[textsize=tiny]{todonotes}

\icmltitlerunning{$\alpha$-PFN: Fast Entropy Search via In-Context Learning}

\begin{document}

\twocolumn[
  \icmltitle{$\alpha$-PFN: Fast Entropy Search via In-Context Learning}



  \icmlsetsymbol{equal}{*}

  \begin{icmlauthorlist}
    \icmlauthor{Herilalaina Rakotoarison}{equal,uh}
    \icmlauthor{Steven Adriaensen}{equal,uf}
    \icmlauthor{Tom Viering}{equal,dut}\\
    \icmlauthor{Carl Hvarfner}{meta}~~
    \icmlauthor{Samuel Müller}{meta}~~
    \icmlauthor{Frank Hutter}{prior,ellis,uf}~~
    \icmlauthor{Eytan Bakshy}{meta}
  \end{icmlauthorlist}

  \icmlaffiliation{uh}{University of Helsinki}
  \icmlaffiliation{uf}{University of Freiburg}
  \icmlaffiliation{dut}{Delft University of Technology}
  \icmlaffiliation{meta}{Meta}
  \icmlaffiliation{prior}{Prior Labs}
  \icmlaffiliation{ellis}{ELLIS Institute Tübingen}

  \icmlcorrespondingauthor{Herilalaina Rakotoarison}{rakoheri@ad.helsinki.fi}
  \icmlcorrespondingauthor{Steven Adriaensen}{adriaens@cs.uni-freiburg.de}
  \icmlcorrespondingauthor{Tom Viering}{t.j.viering@tudelft.nl}

  \icmlkeywords{Machine Learning, ICML, Prior-Fitted Networks, Entropy Search}

  \vskip 0.3in
]




\printAffiliationsAndNotice{\icmlEqualContribution}

\begin{abstract}
    Information‐theoretic acquisition functions such as Entropy Search (ES) offer a principled exploration–exploitation framework for Bayesian optimization (BO).
    However, their practical implementation relies on complicated and slow approximations, i.e., a Monte Carlo estimation of the information gain.
    This complexity can introduce numerical errors and requires specialized, hand-crafted implementations.
    We propose a two‐stage amortization strategy that learns to approximate entropy search-based acquisition functions using Prior‐data Fitted Networks (PFNs) in a single forward pass.
    A first PFN is trained to be conditioned on information about the optima; second, the $\alpha$‐PFN is trained to predict the expected information gain by training on information gains measured with the first PFN.
    The $\alpha$-PFN offers a flexible learned approximation, which replaces the complex heuristic approximations with a single forward pass per candidate, enabling rapid and extensible acquisition evaluation.
    Empirically, our approach is competitive with state‐of‐the‐art entropy search implementations on synthetic and real‐world benchmarks, while accelerating the different entropy search variants across all our experiments, with speed ups over \speedupub{}. Source code: \url{https://github.com/automl/AlphaPFN}.
\end{abstract}

\section{Introduction}
Bayesian Optimization (BO) is a method to maximize the output of a black-box function with as few as possible consecutive trials.
It is especially beneficial when function evaluations are costly.
BO finds use in various fields~\citep{shahriari2015taking}, such as optimizing the hyperparameters of large neural networks \citep{snoek2012practical,feurer2019hyperparameter}.
Here, the black-box function is the performance on a validation set given hyperparameter specifications, which necessitates a full training run for evaluation. It is thus of utmost importance to maximize performance in as few trials as possible.
To achieve this, the canonical BO framework maintains a probabilistic regression surrogate model of the black-box function fit to the performance observations thus far, and maximizes an acquisition function quantifying the exploration-exploitation trade-off for the given posterior. Classical acquisition functions such as Expected Improvement (EI;~\citealp{movckus1975bayesian}) are inherently myopic, optimizing immediate improvements in observed values, and typically suboptimal in locating the optimum, e.g., in noisy or heterogeneous settings.

The information-theoretic acquisition functions~\citep{villemonteix2009informational}, such as Entropy Search (ES;~\citealp{hennig2012entropy}), offer a principled way to perform global optimization of Gaussian Process (GP) surrogate models~\citep{williams2006gaussian} by selecting queries that maximize the expected information gain regarding the location of the optimum.
For example, ES naturally supports non-myopic, noise-robust, and cost-aware strategies that evaluate cheaper configurations not because they are expected to perform well themselves, but because they provide information about what performs best at higher evaluation costs \citep{klein2017fast}.
Although ES offers an elegant framework, its performance and runtime are hampered by complex handcrafted approximation schemes, as there is no simple analytical definition for GPs, unlike for classical acquisition functions such as EI.

Many efforts have been made to improve Entropy Search's computational efficiency and performance with GPs. \citet{hernandez2014PES} show that it is possible to rewrite the Entropy Search acquisition function to depend on terms that are more tractable to estimate and approximate, resulting in an acquisition function which they call Predictive Entropy Search (PES). \citet{wang2017MES} proposed Max-value Entropy Search (MES) and most recently~\citet{hvarfner2022JES} and \citet{tu2022joint} proposed Joint Entropy Search (JES).
These methods adjust the original Entropy Search formulation to enable more efficient approximations and/or better BO performance.
All of these improvements to ES still rely on handcrafted, sampling-based approximations, though. 
The runtime of BO is becoming an increasingly important issue, as practitioners also apply BO to black-box functions that are faster to query (so-called high-throughput optimization;~\citealp{eriksson2019scalable,daulton2022multi,maus2022local}).

Recently, it has been shown that Prior-Data Fitted Networks (PFNs; \citealp{muller2021transformers}), a transformer-based conditional neural process~\citep{garnelo2018conditional} can be used to speed up training and prediction of Gaussian Process regression models \citep{muller2023pfns4bo}.
The key idea is to use transformers to approximate posteriors for a prior: the transformer accepts the training data and test inputs, and outputs the posterior predictive distribution for each test sample.
By training on millions of datasets sampled from a prior, the transformer meta-learns to perform Bayesian prediction in a single forward pass.
Inference can be computationally much more efficient compared to fully-Bayesian GPs, which require the use of Markov Chain Monte Carlo (MCMC;~\citealp{robert1999monte}).
Note that training PFNs has one-time upfront costs (typically on the order of a GPU day) -- but these are amortized over millions of applications to different optimization problems, as inference is fast. 

We explore how the same technique can be applied to approximate quantities used in Entropy Search.
The Entropy Search acquisition functions (PES, MES and JES) all require many Monte Carlo samples and other manual approximations. 
Instead of deriving another approximation, our \modelname{} transformer \textit{learns} to approximate the acquisition functions in a single forward pass.
We evaluate the \modelname{}'s performance against existing sampling-based approximations of PES, MES, and JES in terms of optimization quality (inference regret) and computational cost (optimization runtime).
Our experiments cover two settings: Bayesian optimization with fully Bayesian GPs on synthetic benchmark functions and on real-world black-box hyperparameter optimization tasks from the LCBench~\citep{ZimLin2021a} and HPO-B~\citep{pineda2021hpob} benchmark suites.

\section{Background, Notation and Related Work}
\label{sec:preliminaries}
In this section, we introduce key concepts and related work.

\subsection{Black-box Optimization and Bayesian Optimization}
\label{sec:bo}
In Black-box Optimization (BBO), the goal is to maximize a function $f(x)$ over a domain $A$. We denote $x^* = \argmax_{x \in A} f(x)$ and $f^* = f(x^*)$. We assume for simplicity that $A = [0,1]^d$. We only have black-box access to $f(x)$, i.e., we can only obtain function evaluations which are typically corrupted with noise: $y = f(x) + \epsilon$. BBO is iterative: in each iteration $t$ a query $x_t$ is made to the black-box function and the corresponding observation $y_t$ is revealed.
In Bayesian Optimization (BO), a Bayesian surrogate model, such as Gaussian Process or transformer, is fitted to the previously collected data, i.e., it is fitted on the training data $D_{trn} = \{(x_1,y_1),\ldots,(x_t,y_t)\}$ to predict the Posterior Predictive Distribution (PPD) $p(y_{tst}|D_{trn},x_{tst})$ for a test point $x_{tst}$.
BO uses an acquisition function $\alpha$ that operates on the predictive distribution and quantifies the desirability $\alpha(x,D_{trn})$ of evaluating some $x \in A$, and is used to decide which point to query next as $x_{t+1} = \argmax_{x \in A} \alpha(x,D_{trn})$.

\subsection{Entropy Search (ES)}
\label{sec:es}
The original Entropy Search (ES) method selects queries by maximizing the expected reduction in the entropy of the optimum location \( x^* \)~\citep{hennig2012entropy}:
\begin{equation}
\begin{aligned}
&\alpha_{\mathrm{ES}}(x, D_{\mathrm{trn}})
= H\!\left(p(x^* \mid D_{\mathrm{trn}})\right) \\
&\quad - \E_{y \sim p(y \mid D_{\mathrm{trn}}, x)}
\!\left[
  H\!\left(p(x^* \mid D_{\mathrm{trn}} \cup \{(x,y)\})\right)
\right].
\end{aligned}
\label{OGES}
\end{equation}

Here, \( H(p(x^* | D_{trn})) \) represents the entropy of the posterior distribution over the optimum location given the observed data \( D_{trn} \).
The first term is constant with respect to query selection and can be ignored during optimization.
The second term represents the expected entropy after obtaining the new observation \( (x, y) \), where the expectation is taken over the predictive distribution \( p(y | D_{trn}, x) \).

Computing $\alpha_{ES}$ exactly is generally intractable, as the entropy of the GP’s maximizer does not have a closed-form expression and requires averaging over multiple samples of \( y \).
To approximate this, \citet{hennig2012entropy} propose two methods: Monte Carlo sampling and Expectation Propagation~\citep{minka2001family}. However, both approaches are computationally expensive, making ES impractical for many real-world applications.

\paragraph{Predictive Entropy Search (PES).}
The entropy reduction in ES can be expressed in terms of mutual information (MI) between $x^*$ and $y$ conditioned on $D_{trn}$. 
Utilizing the symmetry of the MI in $x^*$ and $y$, \citet{hernandez2014PES} propose to optimize
\begin{equation}
\begin{aligned}
\alpha_{\mathrm{PES}}&(x, D_{\mathrm{trn}})
= H\!\left(p(y \mid D_{\mathrm{trn}}, x)\right) \\
&\quad - \E_{x^* \sim p(x^* \mid D_{\mathrm{trn}})}
\!\left[
  H\!\left(p(y \mid D_{\mathrm{trn}}, x, x^*)\right)
\right].
\end{aligned}
\label{eq:pes-def}
\end{equation}
which results in the same objective as ES (Equation \ref{OGES}) but allows more efficient approximation.
The first term is analytically tractable since the posterior of the GP is Gaussian.
\citet{hernandez2014PES} propose a sequence of approximations to accurately estimate the entropy in the second term, and the outer expectation over $x^*$ is distributed according to $p(x^* | D_{trn})$.
To obtain these, sample paths from the GP posterior are approximated using random Fourier features (RFF,~\citealp{rahimi2007random}). Each sample path is
maximized to obtain draws of the posterior over $x^*$. The predictive posterior $p(y | D_{trn}, x^*)$ cannot be obtained exactly. Thus, conditioning on tractable alternatives, such as convexity at $x^*$, and constraints such as $f(x^*) \geq \underset{{i\in [1, t]}}{\max} y_i$, serves to approximate it. 

\paragraph{Max-value Entropy Search (MES).} Max-value Entropy Search (MES;~\citealp{wang2017MES}) aims to reduce uncertainty over the maximum function value, $f^*$, by selecting the query point that maximizes the expected information gain:
\begin{equation}
\begin{aligned}
\alpha_{\mathrm{MES}}&(x, D_{\mathrm{trn}})
= H\!\left(p(y \mid D_{\mathrm{trn}}, x)\right) \\
&\quad - \E_{f^* \sim p(f^* \mid D_{\mathrm{trn}})}
\!\left[
  H\!\left(p(y \mid D_{\mathrm{trn}}, x, f^*)\right)
\right].
\end{aligned}
\label{eq:mes-def}
\end{equation}

Here, the expectation is taken over the posterior distribution of $f^*$ given $D_{trn}$. In addition to the RFF sampling approach for $f^*$ and $x^*$ proposed by~\citet{hernandez2014PES}, \citet{wang2017MES} propose a simpler alternative for $f^*$ using a Gumbel distribution. Compared to PES, MES reduces the expectation from $d$ dimensions to one. Moreover, they assume that $p(y | D_{trn}, x, f^*)$ can be well-approximated by a truncated normal distribution, which enables an analytical entropy calculation. However, this assumption holds only in noiseless settings~\citep{pmlr-v119-takeno20a, nguyen2022rectifiedmaxvalueentropysearch}.
Due to the simpler approximation, MES is substantially faster than PES, and has seen multiple extensions, e.g. to parallel queries~\citep{moss2021gibbon} (GIBBON-MES), noisy~\citep{pmlr-v119-takeno20a, pmlr-v162-takeno22a} and multi-fidelity~\citep{moss2021gibbon} problems.

\paragraph{Joint Entropy Search (JES).} Either the distribution $p(x^* | D_{trn})$ or $p(f^* | D_{trn})$ might only provide a limited view of the posterior uncertainty over the optimum. Therefore \citet{hvarfner2022JES} and \citet{tu2022joint} propose to reduce the uncertainty on the joint distribution of the maximum value and its location:
\begin{equation}
\begin{aligned}
&\alpha_{\mathrm{JES}}(x, D_{\mathrm{trn}})
= H\!\left(p(y \mid D_{\mathrm{trn}}, x)\right) \\
& - \E_{(x^*, f^*) \sim p(x^*, f^* \mid D_{\mathrm{trn}})}
\!\left[
  H\!\left(p(y \mid D_{\mathrm{trn}}, x, x^*, f^*)\right)
\right].
\end{aligned}
\label{eq:jes-def}
\end{equation}
The expectation is approximated by sampling in the same manner as PES. For each sample, the pair $(x^*,f^*)$ is added to the GP's training set so that the posterior can be updated using regular GP machinery, conditioning either on a \textit{noiseless} optimal value $f^*$~\citep{hvarfner2022JES}, or a $y^* = f^* + \varepsilon$ containing observation noise~\citep{tu2022joint}.
Both \citet{hvarfner2022JES} and \citet{tu2022joint} use a local constraint to condition on the maximum similar to MES. The resulting extended skew distribution~\citep{nguyen2022rectifiedmaxvalueentropysearch, hvarfner2022JES, hvarfner2023selfcorrecting} does not admit a closed form for the entropy, and is therefore approximated either by Monte Carlo sampling of the integral~\citep{tu2022joint} or moment matching with a Gaussian~\citep{moss2021gibbon, hvarfner2022JES}, lower bounding the MI~\citep{moss2021gibbon}.

\paragraph{Fully Bayesian treatment for PES, MES and JES.} In practice, one may not know the ideal hyperparameters for the GP. A principled Bayesian would set hyperpriors on the hyperparameters, and integrate them out whenever possible. Because these integrals are expensive, typically approximations are used. \citet{hernandez2014PES} were the first to give a fully Bayesian approximation to the acquisition function for ES (and \citet{snoek2012practical} gave the first treatment in the BO literature). \citet{hernandez2014PES} use slice sampling \citep{vanhatalo2013gpstuff} to sample from the posterior over the GP hyperparameters, and this has also been applied to JES and PES \citep{hvarfner2022JES}.
The acquisition function is computed for each of these hyperparameter samples from the posterior and averaged. It would be more Bayesian to integrate out the hyperparameters first, and then compute a single acquisition function for the
fully Bayesian model. However, such a fully Bayesian model would consist of a superposition of Gaussians, making this approach computationally hard. Our $\alpha$-PFN makes this approach computationally tractable.

\subsection{Prior-data Fitted Networks (PFNs)}
\label{sec:pfns}
Prior-data Fitted Networks (PFNs; \citealp{muller2021transformers}) are transformer neural networks that learn to perform Bayesian predictions in a single forward pass by training on synthetic data sampled from a predefined prior $p(D)$.
The prior defines a distribution over datasets $D$ of input and output pairs $(x,y)$.
During training, datasets from the prior are split into two parts: a training and test set.
The transformer takes the training set pairs $(x,y)$ as input (indicated as $D_{trn}$), and is trained to predict the correct outputs for the test set inputs. For simplicity of notation we always assume a test set of size one, and indicate it by $x_{tst}$ and $y_{tst}$. The transformer outputs a distribution for each test object, and is trained with cross-entropy.
It can be shown that the transformer will approximate the posterior predictive distribution $p(y_{tst}|x_{tst},D_{trn})$ for any prior $p(D)$ when trained on prior samples \citep{muller2021transformers}.
At inference time, the training set and (unlabeled) test set are provided as input to the transformer, which then predicts test targets.
Note that no gradient-descent takes place at inference time, but the PFN learns from new data in-context (during the forward pass).

The versatility of this paradigm has led to a variety of applications, such as forecasting time-series~\citep{dooley2024forecastpfn} and learning curves~\citep{adriaensen2024efficient,viering2024epoch}; and as a foundation model for tabular data~\citep{hollmann2025accurate}. Most relevant to our work, PFNs have been used to accurately approximate Gaussian Process regression for Bayesian Optimization~\citep{muller2021transformers,muller2023pfns4bo}.

In this work, we use the TabPFNv2 architecture~\citep{hollmann2025accurate}, which improves upon the original PFN architecture used in PFNs4BO~\citep{muller2023pfns4bo}. Specifically, the original architecture requires zero-padding if features are not used, and can only be applied to dimensionalities seen during training.
The v2 architecture processes each scalar cell of tabular data individually (a cell refers to a scalar value $x_i$ or $y$), encoding each to one embedding vector. Note that encoders for each $x_i$ are the same, but the encoder for $y$ is different. The layers of this architecture are composed of three modules: i) a self-attention across features and $y$ within each sample, ii) a self-attention between samples for each feature separately, and iii) a per-embedding MLP that processes each embedding separately.


\subsection{Transformers for Black-box Optimization}

Multiple previous works have proposed to use transformers for black-box optimization.
\citet{chen2022optformer} proposed the use of transformers to predict points to be selected for Bayesian optimization, based on empirical traces of other Bayesian optimization algorithms utilizing handcrafted acquisition functions such as EI.
\citet{chen2017learning} also use GP sample paths and meta-learn acquisition functions, but do not focus on information-theoretic functions.
\citet{tiao2021bore} and later works by \citet{song2022general} propose the use of binary classification models to directly predict acquisition functions such as probability of improvement or expected improvement. Following a similar idea, end-to-end approaches~\citep{Volpp2020Meta-Learning,maraval2023end} have also been proposed to simultaneously learn both the acquisition function and the surrogate model from scratch using reinforcement learning from trajectories. However, these latter two works focus on the transfer learning BO setting, where it is assumed that a family of related functions are available to learn from. In contrast, we do not use any transfer learning.

\citet{chang2024amortized} propose an architecture to perform inference conditional on latent variables, such as the maximum of a GP.
They use a similar architecture as \citep{muller2021transformers} to condition on $f^*$ and to predict the posterior over $f^*$, and then use Monte Carlo sampling at inference time to compute MES.
In this work, we not only remove the slow Monte Carlo sampling at inference time by amortizing the acquisition computation itself, but also extend the approach to PES and JES. This is naturally harder for their approach, because they would need to predict $x^*$, which is higher-dimensional and requires autoregressive decoding due to strong dependencies.

Recent work has explored amortization of information-theoretic acquisition functions and related Bayesian active learning tasks. Most directly comparable, \citet{igoe2026efficient} train equivariant neural networks to amortize Bayesian Experiment Design, of which Entropy Search is a special case. \citet{hu2024infonet} instead amortize the mutual-information computation itself; relative to that approach, the second stage amortization in \modelname{} offers further speed-ups since we do not need to approximate the expectation over optima at inference time. More broadly, \citet{huang2026aline} jointly amortize Bayesian inference and active data acquisition using transformers, and \citet{li2026amortized} amortize safe active learning via neural policies pretrained on simulated nonparametric functions. All share our motivation of replacing expensive inference-time computation with a learned model, while differing in architecture, training prior, and the specific acquisition functions considered. 

\section{Entropy Search with PFNs}
\label{sec:methods}
In the following we explain how we train \modelname{}, which predicts the acquisition values for PES, MES, and JES directly.
See Figure \ref{fig:pipeline} for an overview of the training setup.
To generate training data for the \modelname{}, we first train an auxiliary (base) PFN.
The base PFN is trained to make $y$ predictions (optionally) conditioned on information of the location and/or value of the optimum.
Next to the ordinary PPD $p(y|D_{trn},x)$, this base PFN can compute $p(y|D_{trn},x,I)$ with $I=x^*$ for PES, $I=f^*$ for MES, and $I=(x^*,f^*)$ for JES, as illustrated in Figure~\ref{fig:basepfn}.
For each ES variant, we now train a second PFN model, the \modelname{}, which directly predicts the acquisition function. More specifically, \modelname{} is trained to approximate the full information gain distribution by using the base PFN's predictions, conditioned on each dataset's true optimum (precomputed, see below), as training targets, and at test time we recover the acquisition function (expected information gain), as the mean thereof.




\begin{figure*}[t]
    \centering
    \includegraphics[width=0.9\linewidth]{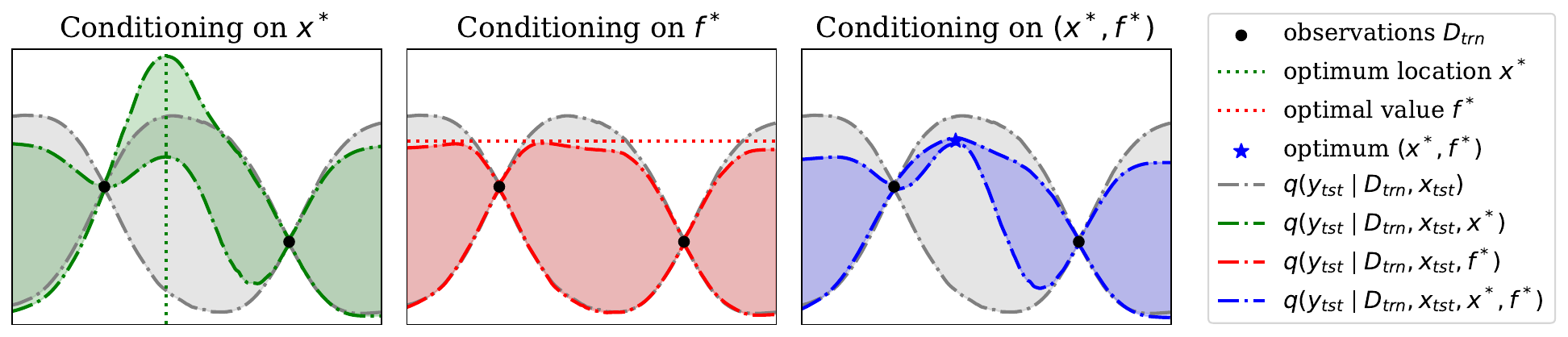}
    \caption{Our base PFN estimates the Posterior Predictive Distribution (PPD) conditioned on different types of information regarding the optimum: none (unconditional case), $x^*$, $f^*$, or both. Note that this example considers a specific optimum. In practice, the optimum, and therefore the information gain, is uncertain and the ES acquisitions compute \emph{expected} gain. This is typically achieved by MC sampling, where $x^*$ and/or $f^*$ samples are obtained by optimizing GP posterior sample paths. We propose to learn to predict the gain distribution directly from $D_{trn}$ using another \modelname{}.}
    \label{fig:basepfn}
\end{figure*}

\begin{figure*}[tpb]
    \centering
    \includegraphics[width=1\linewidth]{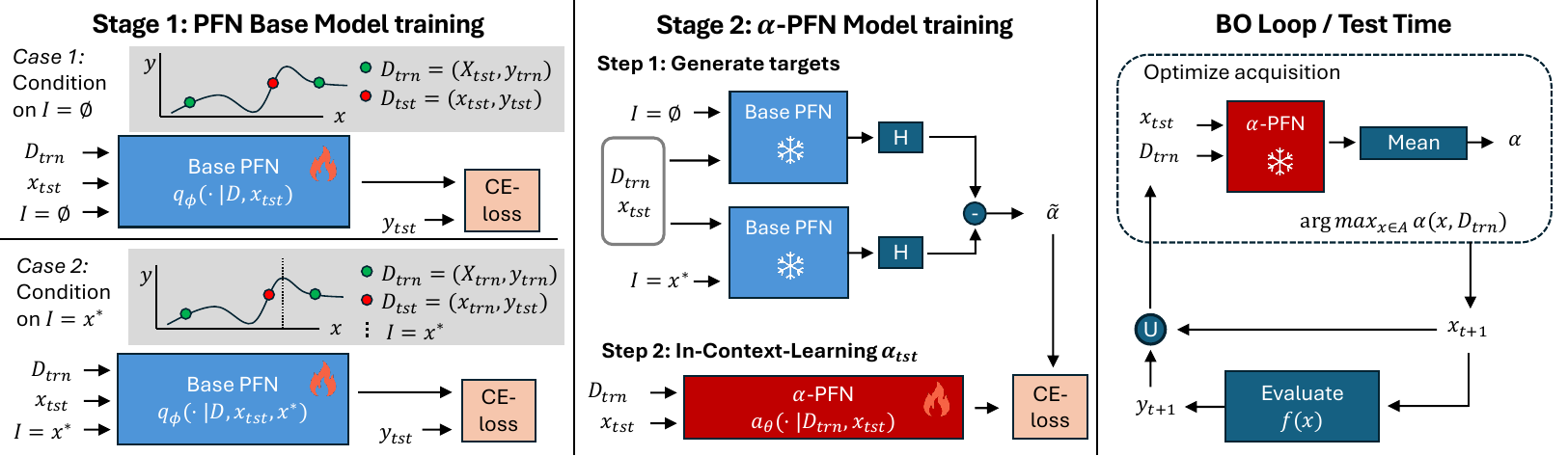}
    \caption{An overview of our pipeline. Left: we illustrate 2 out of 4 cases for the base PFN training. Middle: how to train the \modelname{} for PES. Right: how to use the \modelname{} at test-time in a BO loop. Note: PFNs were trained only once and reused in all our BO experiments.}
    \label{fig:pipeline}
\end{figure*}

\paragraph{Pre-computing Gaussian Process prior data.} To construct our PFN models for GP inference we need to train the model on millions of dataset samples from a GP prior.
Furthermore, we need to know $x^*$ and $f^*$ for each dataset, which is not feasible to compute for an exact GP sample.
To make this feasible, we approximate the GP samples using Random Fourier Features (RFFs; \citealp{rahimi2007random}). The results of this precomputation are (approximate) samples, that can be queried efficiently for arbitrary $x$, on which we can employ gradient-based optimization to find approximate $x^*$ and $f^*$.
See Appendix~\ref{app:precompute} for more details.


\paragraph{Training the base PFN.}
When we are training on our precomputed RFF GP data, we have access to $x^*$ and $f^*$, and feed these to the PFN to condition on them.
We add one extra data point to the context of the PFN, which is encoded just like the other data points but using a different encoder, so that the PFN will learn to treat it differently.  
We randomly pass $x^*$ and $f^*$ with $50\%$ probability such that a single model is able to handle all four cases equally.
This way we train our base PFN $q(y|x,D_{trn},I)$, where $I$ can contain $x^*$ and/or $f^*$.
For more details see Appendix~\ref{app:base}. 

\paragraph{Training \modelname{}.}
After training the base model, we train a second PFN model, the \modelname{}, which takes the observation data $D$ and query point $x$ as input and predicts the acquisition $\alpha(x,D)$ directly.
We train \modelname{} to use $D_{trn}$ and $x$ to predict
\begin{equation}
    H(q(y|D_{trn},x)) - H(q(y|D_{trn},x,I)),
\label{eq:target}
\end{equation}
where $I$ is extra information about the optimum.
The (differential) entropy is computed analytically on the PFN's Riemann distribution \citep{muller2021transformers}.\footnote{Note that we predict the information gain of $y$, the noisy target, consistent with the original PES formulation \citep{hennig2012entropy,hernandez2014PES}.}
During training, we can just feed in the $x^*$ and/or $f^*$ that were precomputed for our GP data.
This info is only used to determine the prediction target during training and is \emph{not required at test time for the \modelname{}}.
\modelname{} will learn the distribution of this information gain, as this is a random variable, where the varying factor is the location and/or value of the optimum. The mean of this distribution that \modelname{} aims to learn coincides with the PES/MES/JES acquisition in equations~\ref{eq:pes-def}/\ref{eq:mes-def}/\ref{eq:jes-def}. We will formally prove this in the next section. This way, the \modelname{} avoids the need to sample $x^*$ and $f^*$ at test-time via MC samples. For more training details see Appendix \ref{app:training_details}.

\paragraph{Handling domain shift at optimization time.} \citet{muller2023pfns4bo} train PFNs for BO by sampling inputs $x$ uniformly from the domain. However, during the process of BO, the query points $x$ tend to cluster around local optima. This domain shift in the pretraining procedure appears to adversely affect BO performance of PFNs in higher dimensions.
To combat this, we train both base and $\alpha$-PFN with a sampling procedure that mimics the clustering behavior observed during the BO process. See Appendix \ref{app:clustering} for more detail on how we implemented a fast heuristic to generate approximate BO-traces. 

\paragraph{Fully-Bayesian models.}
To approximate ES for a fully-Bayesian GP, a GP with priors over its hyperparameters, we simply sample its hyperparameters before sampling from the GP at train time.
This is the only minor change needed in our training pipeline and incurs virtually no additional overhead.
When trained on fully-Bayesian data, our base model $q$ directly integrates out the uncertainty over the GP's hyperparameters.
In contrast to the classical GP approach, we only compute one acquisition function, derived from the information gain of this fully Bayesian base model.
This acquisition allows our ES-PFN to sample points to reduce the uncertainty with respect to the hyperparameters as well. 
As such, we can expect the fully-Bayesian \modelname{} to query points more efficiently in the fully-Bayesian case than the GP counterparts, since the GP ES variants ignore uncertainty over hyperparameters when computing the acquisition.

\paragraph{Training procedure and resources. } We train one fully Bayesian base model and three \modelname{}s, one per ES variant, on 100M datasets sampled from the hyperprior with varying dimensionalities (1-6D) and varying lengthscales per dimensionality (Automatic Relevance Determination). Training the base model takes approximately 13 hours on 4$\times$ NVIDIA H200, and each \modelname{} takes about 16 hours on 4$\times$ NVIDIA L40S. For details regarding the training data generation, see Appendix~\ref{app:precompute}, and for model architecture and training, see Appendix~\ref{app:training_details}. 

\section{\modelname{} Approximates Entropy Search}
\label{sec:theory}
We derive in what way \modelname{} approximates entropy search.
To train \modelname{}, we use an additional PFN, our base model $q(y|x,D,I)$, where $D$ is the observation data, $x$ is the currently considered point and $I$ is extra information about the maximum of the function underlying $D$.
We train the base model with a standard PFN objective, but additionally feed $I$ to it.
The proof that $q$ approximates $p(y|x,D,I)$ is very similar to the PFN proof by \citet{muller2021transformers} and is given in Appendix \ref{sec:theory_base}.

The actual \modelname{} $a_\theta(\cdot|D,x)$ takes $D$ and $x$ as input, and outputs a distribution, which has the different ES acquisition functions as its mean (depending on $I$).
\modelname{} is trained to minimize
\begin{align}
    l_\theta = \E_{D, x, I} \left[-\log a_\theta(\tilde\alpha(D,x,I)|x,D)\right],
\end{align}
where $\tilde\alpha(D,x,I) = H(q(y|D,x)) - H(q(y|D,x,I))$.
To understand why this is the right loss, we re-examine the entropy search acquisition functions.
See Section \ref{sec:es} for further reference on the definition of entropy search.
Assuming $q$ approximates $p$ exactly, we can write PES, MES, JES as
\begin{align}
\E_{\tilde\alpha \sim p(\tilde\alpha|D,x)}[\tilde\alpha] &= \E_{I \sim p(I|D,x)}[\tilde\alpha(D,x,I)]\label{eq:abstract_es}\\
&= \E_{I \sim p(I|D,x)}[H(q(y|D,x)) \\ & \quad - H(q(y|D,x,I))],
\end{align}
where $I=x^*$ for PES, $I=f^*$ for MES and $I=(x^*,f^*)$ for JES.
The change of variable in Equation \ref{eq:abstract_es} is justified by the law of the unconscious statistician, since we only use $\tilde\alpha$ inside an expectation and do not need its density explicitly.

\begin{proposition}
    The objective $l_\theta$ is equal to the KL divergence between $p(\tilde\alpha|D,x)$ and the \modelname{}'s output up to a constant. We can see that the mean of \modelname{}'s output distribution can then be used to obtain entropy search approximations, like in the acquisition function definition in Equation \ref{eq:abstract_es}.
\end{proposition}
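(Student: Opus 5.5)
The plan is the standard PFN argument of \citet{muller2021transformers}, applied to the derived target $\tilde\alpha$ instead of $y$. First I would rewrite the expectation in $l_\theta$ hierarchically. Sampling $(D, x, I)$ from the prior amounts to drawing $(D,x)$ from its marginal and then $I \sim p(I\mid D,x)$. Since $\tilde\alpha(D,x,I)$ is a deterministic function of $(D,x,I)$, the law of the unconscious statistician (as already used for Equation~\ref{eq:abstract_es}) lets me replace the inner expectation over $I$ by one over $\tilde\alpha \sim p(\tilde\alpha\mid D,x)$, the pushforward of $p(I\mid D,x)$ under $I\mapsto \tilde\alpha(D,x,I)$. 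This gives
\begin{align*}
l_\theta = \E_{D,x}\Big[ \E_{\tilde\alpha\sim p(\tilde\alpha|D,x)}\big[-\log a_\theta(\tilde\alpha\mid x,D)\big]\Big].
\end{align*}

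Next I add and subtract $\log p(\tilde\alpha\mid D,x)$ inside the inner expectation. The inner term splits into $\mathrm{KL}\big(p(\tilde\alpha\mid D,x)\,\|\,a_\theta(\cdot\mid x,D)\big)$ plus the differential entropy $H(p(\tilde\alpha\mid D,x))$. The entropy term does not depend on $\theta$, so its outer expectation over $(D,x)$ is the constant in the statement. Hence $l_\theta = \E_{D,x}[\mathrm{KL}(p(\tilde\alpha\mid D,x)\,\|\,a_\theta(\cdot\mid x,D))] + C$, which is the claimed equality.

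For the second sentence, note that the KL divergence is nonnegative and vanishes only when the two distributions agree. So a minimizer over a sufficiently expressive family satisfies $a_\theta(\cdot\mid x,D)=p(\tilde\alpha\mid D,x)$ for almost every $(D,x)$. Its mean therefore equals $\E_{\tilde\alpha\sim p(\tilde\alpha|D,x)}[\tilde\alpha]$, which by Equation~\ref{eq:abstract_es} is the PES, MES or JES acquisition for $I=x^*$, $f^*$ or $(x^*,f^*)$ respectively. Here I use the assumption that $q=p$ exactly, which is justified approximately by the base-model result in Appendix~\ref{sec:theory_base}.

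The main obstacle is the measure-theoretic bookkeeping. The pushforward $p(\tilde\alpha\mid D,x)$ may lack a Lebesgue density; for example, it can have atoms when $\tilde\alpha$ is constant in $I$. In that case the entropy constant can be $-\infty$ and the ``density'' $a_\theta(\tilde\alpha)$ of the Riemann output is only a discretized one. I would handle this as \citet{muller2021transformers} do. I would work with the bucketed (Riemann) version of $\tilde\alpha$, for which all quantities are discrete and finite and the decomposition into cross-entropy, KL and entropy is exact. I would then remark that the continuous case follows whenever $H(p(\tilde\alpha\mid D,x))$ is finite. The mean recovered from the Riemann distribution then matches the acquisition up to discretization error.
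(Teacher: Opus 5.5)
Your proposal is correct and follows the paper's proof essentially step for step. Both arguments push $p(I\mid D,x)$ forward to $p(\tilde\alpha\mid D,x)$, take the iterated expectation, write the cross-entropy as KL plus a $\theta$-independent entropy term, and then identify the mean with Equation~\ref{eq:abstract_es} under the assumption $q=p$. Your additional remarks are refinements the paper omits rather than a different route: that $q=p$ is needed only for that final identification, and that atoms in $p(\tilde\alpha\mid D,x)$ make the KL/entropy split degenerate, so it is best stated for the bucketed Riemann target.
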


\begin{proof}
    This can be shown as follows
    \begin{align}
    l_\theta &= \E_{D, x, I} \left[-\log a_\theta(\tilde\alpha(D,x,I)|x,D)\right]\\
    &= \E_{D, x, \tilde\alpha} \left[-\log a_\theta(\tilde\alpha|x,D)\right]\\
    &= \E_{D, x} \left[ \E_{\tilde\alpha}[-\log a_\theta(\tilde\alpha|x,D)]\right]\\
    &= \E_{D, x} \left[ \text{CE}(p(\tilde\alpha|D,x), a_\theta(\tilde\alpha|x,D)) \right]\\
    &= \E_{D, x} \left[ \text{KL}(p(\tilde\alpha|D,x), a_\theta(\tilde\alpha|x,D)) \right] + C,
    \end{align}
    where $\text{CE}$ is cross-entropy and $\text{KL}$ is the Kullback-Leibler divergence.
    This shows that our training loss optimizes for $a_\theta$ to approximate $p(\tilde\alpha|D,x)$ under the assumption that $q$ is exact.
    Now we can use $a_\theta$ in Equation \ref{eq:abstract_es} to approximate all ES variants, as $\E_{\tilde\alpha \sim a_\theta(\cdot|x,D)}[\tilde\alpha] \approx \E_{\tilde\alpha \sim p(\tilde\alpha|D,x)}[\tilde\alpha]$.
\end{proof}

\section{Experimental Setup}

The goal of the experiments is to show that \modelname{} is a practical and efficient alternative to GP-based approximations of Entropy Search. For this reason, we focus on a specific setting where PFN and GP use \emph{the same exact} prior (defined in Appendix~\ref{app:training_prior}), such that we can compare performances and runtimes of the two approaches directly. However, note that sharing the same prior also implies that PFN and GP should exhibit similar behavior and are not expected to show a large performance difference. Specifically, we do not expect GP-ES to be state-of-the-art on these benchmarks, so we do not claim \modelname{} variants to be either. Our evaluation stress tests \modelname{}. Most of the functions under evaluation will not match this restrictive prior, i.e., we evaluate \modelname{} out-of-distribution.
We also test the extrapolation capabilities of \modelname{} to generalize on large dimensions (up to 16D) and context sizes (100 BO iterations), while being trained on smaller dimensions (up to 6D) and context sizes (up to 50). 

\begin{figure*}[ht]
  \centering
    \begin{subfigure}[]{\linewidth}
        \centering
        \includegraphics[width=\linewidth]{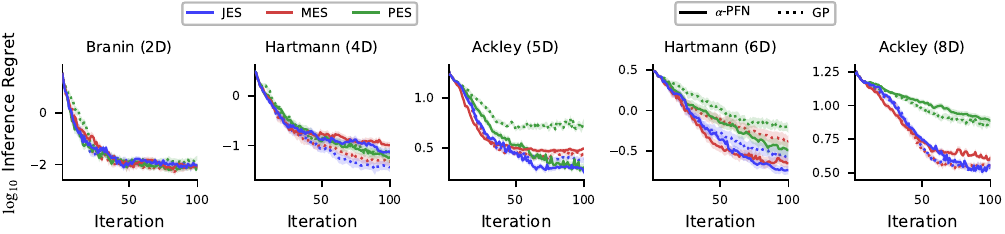}
        \caption{Inference regret on synthetic functions. Lower is better.}
        \label{fig:synthetic}
    \end{subfigure}
    
    \vspace{1em}
    
    \begin{subfigure}[]{\linewidth}
        \centering
        \includegraphics[width=\linewidth]{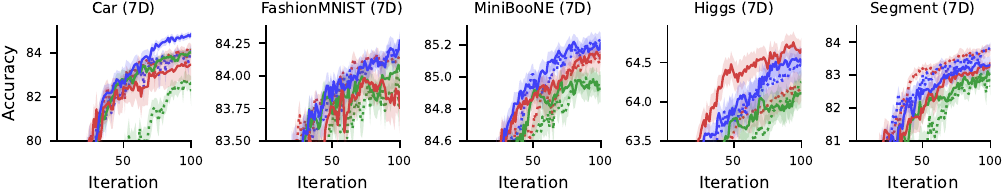}
        \caption{Accuracy (predicted best performance) on LCBench. Higher is better.}
        \label{fig:LCBench}
    \end{subfigure}
    
    \vspace{1em}
    
    \begin{subfigure}[]{\linewidth}
        \centering
        \includegraphics[width=\linewidth,trim=0 0 0 0,clip]{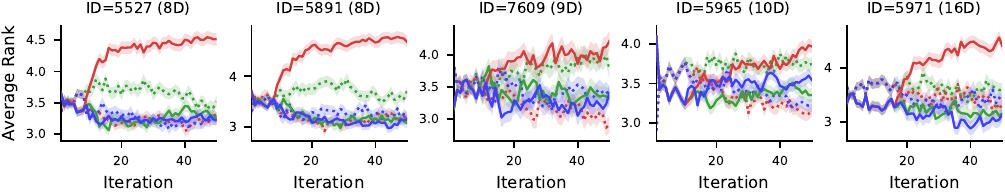}
        \caption{Average ranking on different HPO-B search spaces. Lower is better.}
        \label{fig:hpob}
    \end{subfigure}
    
    \caption{Bayesian optimization performance comparison between GP-MCMC (NUTS) and \modelname{} across different synthetic test functions and real HPO benchmarks. The shaded area indicates one standard error.}
    \label{fig:all_results}
\end{figure*}

\paragraph{Baselines. } We compare \modelname{} against existing Entropy Search approximations in the BoTorch library~\citep{balandat2020botorch}: JES~\citep{hvarfner2022JES}, MES-GIBBON~\citep{moss2021gibbon} and PES~\citep{hernandez2014PES}. Since no fully Bayesian entropy search implementations exist, we compare against MCMC-ES instead, which approximates a posterior over the acquisition, rather than computing the acquisition of the fully Bayesian model. For our baselines we use NUTS \citep{hoffman2014no}, which uses Hamiltonian Monte Carlo (HMC), again making use of the BoTorch library for JES, MES-GIBBON and PES. Furthermore, we include EI as a reference, to show the merits of using entropy-based acquisition functions.

\paragraph{Evaluation datasets.} We evaluate on well-known black-box optimization benchmarks, including synthetic functions (Branin, Hartmann, Ackley) and real HPO tasks (LCBench~\citep{ZimLin2021a}, HPO-B~\citep{pineda2021hpob}). Test functions include both continuous and discrete domains. We incorporate one controlled out-of-distribution (OOD) study, where we add noise to the output that is significantly larger than noise during training. For further details (e.g., evaluation protocol) see Appendix~\ref{app:bo_details}.

\begin{table*}[ht]
    \centering
    \caption{Runtime comparison (minutes, on CPU) between GP-MCMC and \modelname{} across tasks. Speedup = GP/\modelname{}. Values represent the median cumulative runtime of model fitting and acquisition optimization across BO iterations. Domain: C = Continuous, D = Discrete.}
    \label{tab:runtime_transposed}
    \resizebox{\linewidth}{!}{%
    \begin{tabular}{lccccccccccccccc}
    \toprule
     & \rotatebox{70}{Branin} & \rotatebox{70}{Hartmann} & \rotatebox{70}{Ackley} & \rotatebox{70}{Hartmann} & \rotatebox{70}{Car} & \rotatebox{70}{Fashion} & \rotatebox{70}{Higgs} & \rotatebox{70}{MiniBooNE} & \rotatebox{70}{Segment} & \rotatebox{70}{Ackley} & \rotatebox{70}{HPO-B-5527} & \rotatebox{70}{HPO-B-5891} & \rotatebox{70}{HPO-B-7609} & \rotatebox{70}{HPO-B-5965} & \rotatebox{70}{HPO-B-5971} \\
    Dim & 2 & 4 & 5 & 6 & 7 & 7 & 7 & 7 & 7 & 8 & 8 & 8 & 9 & 10 & 16 \\
    Domain & C & C & C & C & C & C & C & C & C & C & D & D & D & D & D \\
    \midrule
    \multicolumn{16}{l}{\textbf{GP Fully Bayesian}} \\
    \quad MES & 18.3 & 57.0 & 30.9 & 81.6 & 180.6 & 125.4 & 80.1 & 223.1 & 48.8 & 45.5 & 58.0 & 51.8 & 74.5 & 122.3 & 69.4 \\
    \quad JES & 26.8 & 55.0 & 60.4 & 172.3 & 259.7 & 123.4 & 102.7 & 150.0 & 66.8 & 78.9 & 75.9 & 82.6 & 41.5 & 128.1 & 86.2 \\
    \quad PES & 34.3 & 95.0 & 78.0 & 117.7 & 213.7 & 169.5 & 94.6 & 107.7 & 104.2 & 79.0 & 63.8 & 97.0 & 100.2 & 193.9 & 92.9 \\
    \midrule
    \multicolumn{16}{l}{\textbf{\modelname{}}} \\
    \quad MES & 5.4 & 11.1 & 12.6 & 18.9 & 21.0 & 16.7 & 28.1 & 28.4 & 30.0 & 23.4 & 10.5 & 1.7 & 1.1 & 10.1 & 2.9 \\
    \quad JES & 6.1 & 11.9 & 14.9 & 18.9 & 19.9 & 25.1 & 32.5 & 19.7 & 32.8 & 23.5 & 8.4 & 1.4 & 1.2 & 7.7 & 2.3 \\
    \quad PES & 5.3 & 9.3 & 15.5 & 20.3 & 20.9 & 15.0 & 19.6 & 26.1 & 25.4 & 20.6 & 11.7 & 1.7 & 1.4 & 9.4 & 2.9 \\
    \midrule
    \multicolumn{16}{l}{\textbf{\modelname{} Speedup ($\times$)}} \\
    \quad MES & \textbf{3.4} & \textbf{5.1} & \textbf{2.4} & \textbf{4.3} & \textbf{8.6} & \textbf{7.5} & \textbf{2.9} & \textbf{7.9} & \textbf{1.6} & \textbf{1.9} & \textbf{5.5} & \textbf{31.3} & \textbf{65.0} & \textbf{12.1} & \textbf{24.2} \\
    \quad JES & \textbf{4.4} & \textbf{4.6} & \textbf{4.0} & \textbf{9.1} & \textbf{13.1} & \textbf{4.9} & \textbf{3.2} & \textbf{7.6} & \textbf{2.0} & \textbf{3.4} & \textbf{9.1} & \textbf{58.7} & \textbf{34.4} & \textbf{16.7} & \textbf{37.6} \\
    \quad PES & \textbf{6.4} & \textbf{10.2} & \textbf{5.0} & \textbf{5.8} & \textbf{10.2} & \textbf{11.3} & \textbf{4.8} & \textbf{4.1} & \textbf{4.1} & \textbf{3.8} & \textbf{5.5} & \textbf{57.2} & \textbf{72.4} & \textbf{20.6} & \textbf{31.5} \\
    \bottomrule
    \end{tabular}
    }
\end{table*}
\begin{figure*}[h]
    \includegraphics[width=0.98\textwidth]{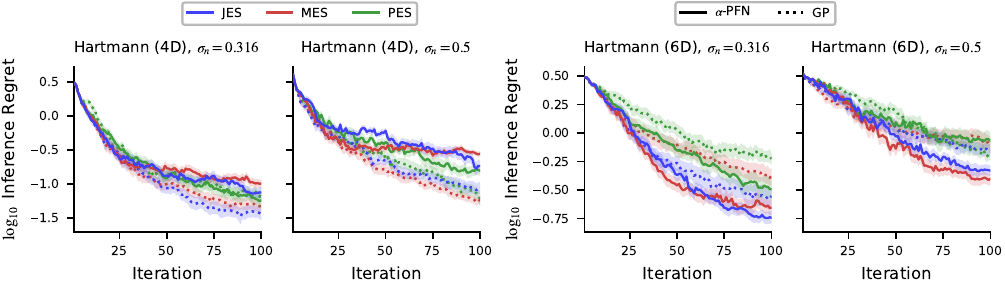}
    \caption{Noise ablation on Hartmann 4D and 6D, comparing the main setting ($\sigma_n = 0.316$) to a higher-noise OOD setting ($\sigma_n = 0.5$). \modelname{} degrades similarly to the corresponding GP baselines.}
    \label{fig:noise-ablation}
\end{figure*}

\section{Results}
\label{sec:results}

For clarity, we include only regret-based comparison of JES, MES, and PES. Additional results, including comparison to EI (Figure~\ref{fig:full_results_ei}), and pair-wise win-rate (Figure~\ref{fig:win_rates}) are available in Appendix~\ref{app:additional_results}. While EI, especially on HPO-B, yields strong performance, our goal is to compare ES methods. We show that EI PFN is competitive with GP EI (Appendix~\ref{app:additional_results}).

\paragraph{Synthetic Test Functions.}  The results in Figure \ref{fig:synthetic} are averaged over 30 repetitions. The PFN often closely matches the GP performance. This is supported by the moderate win-rates in Figure~\ref{fig:win_rates}. For average regret, PES variants are generally competitive or superior. The JES variant under-performs on Hartmann 4D, as does the MES variant, which also shows degraded performance on large context sizes on Ackley 8D. On the other hand, all PFN variants perform better on Hartmann 6D, and JES additionally on Ackley 5D. 

\paragraph{Real-World HPO Tasks (LCBench, HPO-B).}
In this experiment we consider optimizing real-world black-box optimization tasks, see Figures \ref{fig:LCBench}-\ref{fig:hpob}. Results are from 30 randomly initialized runs. Performances are often quite close, which is also reflected in the average rankings (over problems) that are quite high (last panel). On LCBench, \modelname{} variants often outperform the baselines, except on Segment. Overall, the JES-\modelname{} performs consistently better across all tasks. While MES-\modelname{} performs best
in Higgs (LCBench), its performance is worse, often outperformed by the GP baseline, in particular on HPO-B. 

\paragraph{Timing Results.} Table~\ref{tab:runtime_transposed} reports the median cumulative runtime (in minutes, on CPU) for GP-MCMC and \modelname{} across all benchmarks. \modelname{} delivers substantial speedups across all tasks and acquisition functions, consistently outperforming GP-MCMC in computational efficiency. Speedups range from $1.6\times$ to over $72\times$, with HPO-B speedups commonly above $30\times$ and reaching beyond $70\times$. These results demonstrate that \modelname{} not only matches the optimization performance of GP-based entropy search methods but does so at a much lower computational cost, making \modelname{} practical for larger-scale applications.


\paragraph{Out-of-distribution (OOD) Noise Ablation.} We evaluate the robustness of \modelname{} to OOD noise. We use a higher-noise setting which is unlikely under the training prior (Appendix~\ref{app:training_prior}), $\sigma_n = 0.5$. Experiments are conducted on Hartmann 4D and 6D. As shown in Figure~\ref{fig:noise-ablation}, performance degrades with larger noise for both GP and \modelname{} variants. Despite this out-of-distribution setting, we do not observe a clear additional failure mode for \modelname{}. Instead, it degrades at a similar rate as the GP baselines it approximates.

\section{Summary, limitations, and future research}
\label{sec:conclusion}
Our results demonstrate that PFNs can be used for Entropy Search.
We show that \modelname{} is capable of simulating the state-of-the-art (JES) at reduced runtimes. The strong speed improvements highlight its ability to learn more efficient approximations than handcrafted alternatives. It is difficult to compare runtimes, as the hyperparameters of the baselines, such as the number of MC samples used, can influence runtimes significantly. We tried to set these reasonably.

The \modelname{} often matches the performance of the GPs, except for functions or datasets that are out of distribution. One way to mitigate this is through developing more diverse priors or transformations at test time. A major strength of our framework is its flexibility.
While we used a GP-based hyperprior to align with standard ES methods, the PFN approach is agnostic to the choice of prior, e.g., Bayesian neural networks, ensembles, etc. could be used easily.
Exploring alternative priors is a promising direction for future work. One bottleneck is that the \modelname{} needs to be retrained for each prior; something that may be resolved by methods like \citet{whittle2025distribution}.
So far, we have trained models up to 6 dimensions with training context sizes up to 50, and find that the PFNs can generalize to higher-dimensional problems or higher iteration counts.  \citet{yugit} illustrate PFNs can scale to 500D. However, this is a significant investment which we postpone to future work. 

In summary, our results position PFNs as a promising and general tool for acquisition function amortization. They invite further work on scaling, generalization to other Monte Carlo acquisition functions~\citep{balandat2020botorch}, a more efficient implementation, and broader applications.





\section*{Impact Statement}

This work makes a fundamental contribution to the development of efficient acquisition functions for Bayesian optimization by replacing costly sampling-based approximations with learned approximations via Prior-data Fitted Networks (PFNs). As such, it does not directly interact with sensitive application areas or decision-making domains and is unlikely to pose immediate negative societal risks. By significantly reducing the computational overhead of Entropy Search, our approach lowers the resource requirements for effective Bayesian optimization. This has two positive implications: (1) it supports the democratization of advanced black-box optimization methods, enabling broader access to state-of-the-art tools without requiring large compute budgets; and (2) it contributes to reducing energy consumption in hyperparameter optimization and similar tasks, offering a modest but meaningful environmental benefit.

\section*{Acknowledgements}

Tom Viering, Steven Adriaensen, Herilalaina Rakotoarison, and Frank Hutter acknowledge funding by TAILOR, a project funded by EU Horizon 2020 research and innovation programme under GA No 952215. This work would not have happened without a research visit by Tom Viering made possible
by the TAILOR Collaboration Exchange Fund. Frank Hutter acknowledges the financial support of the Hector Foundation. Steven Adriaensen, Herilalaina Rakotoarison, and Frank Hutter acknowledge funding by the state of Baden-Württemberg through bwHPC and the German Research Foundation (DFG), supporting project number 455622343 (bwForCluster NEMO 2); and by the European Union (via ERC Consolidator Grant 'Deep Learning 2.0', grant no.~101045765). This research was funded by the DFG under grant number 539134284, through EFRE (FEIH\_2698644) and the state of Baden-Württemberg. 

\begin{center}\includegraphics[width=0.2\textwidth]{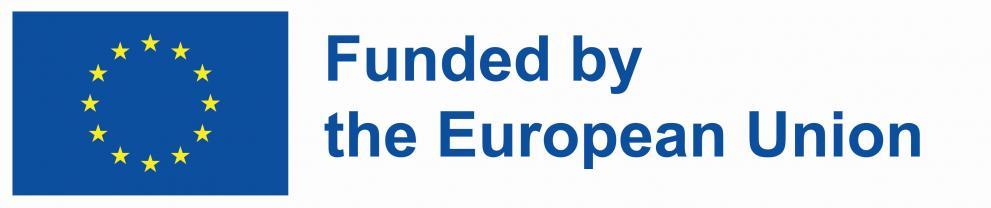}~~~
\includegraphics[width=0.2\textwidth]{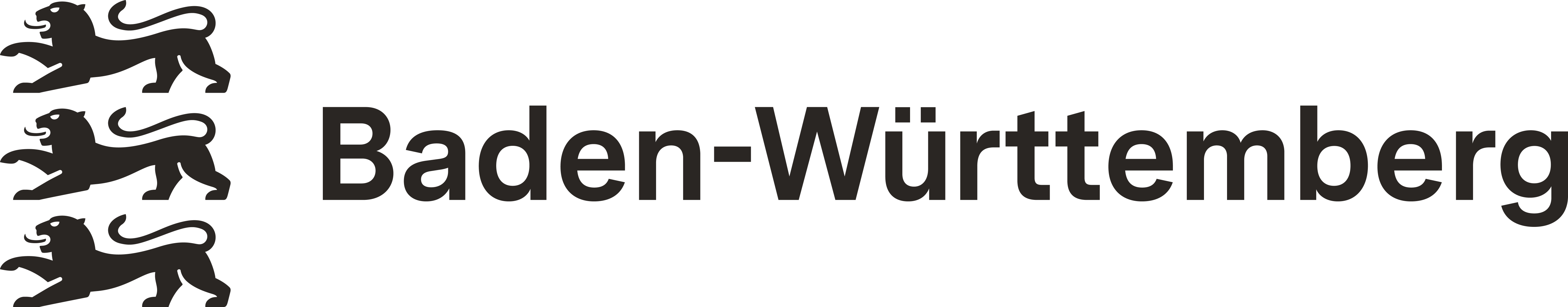} 
\end{center}

\bibliographystyle{icml2026}
\bibliography{ref}

\newpage
\appendix
\onecolumn

\appendix

\section{Proof that the Base-PFN approximates the conditioned PPD}
\label{sec:theory_base}
The loss used for training the conditional base-PFN is:
$$l_\theta = \E_{D \cup \{x,y\}\cup I} \left[-\log q_\theta(y|x,D,I)\right]$$
Here, the $I$ is information that can be conditioned on. This is either: $x^*$, $f^*$ or both: $(x^*,f^*)$. Note that during the training, we sample $D$, $(x,y)$ and $I$ jointly. In practice, this is done by sampling the Gaussian Processes using the Random Fourier Features approximation, where we can control the identity of the Gaussian Process by a latent vector $w$. For such a Gaussian Process, we can bruteforce compute the maximum and its location, resulting in the information $I$ that we store together with $w$. 

\begin{proposition}
The proposed objective $l_\theta$ is equal to the expectation of the cross-entropy between the true conditional PPD and its approximation $q_\theta$.
\end{proposition}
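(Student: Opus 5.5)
The plan mirrors the standard PFN argument of \citet{muller2021transformers}, adapted to carry the extra conditioning variable $I$. I would use the tower property of expectation, splitting the joint expectation over $(D,x,y,I)$ into an outer expectation over the conditioning variables $(D,x,I)$ and an inner expectation over $y$ given them.

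First, I would fix the generative process precisely. Sampling the latent RFF vector $w$, computing $I$ from $w$ (by brute-force maximization), and sampling $D$ and $(x,y)$ from the GP indexed by $w$ induces a joint distribution $p(D,x,y,I)$. Marginalizing out $w$ gives the true conditional PPD
\[
p(y\mid x,D,I)=\frac{p(D,x,y,I)}{p(D,x,I)}.
\]
It is important that $I$ is a deterministic function of $w$, so it is a legitimate random variable on the same space as $(D,x,y)$. Under this joint law $y$ and $I$ are dependent through $w$, and that dependence is exactly what the conditional PPD captures.

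Second, I would factor $p(D,x,y,I)=p(D,x,I)\,p(y\mid x,D,I)$ and rewrite the loss as
\[
l_\theta=\E_{D,x,I}\Bigl[\E_{y\sim p(y\mid x,D,I)}\bigl[-\log q_\theta(y\mid x,D,I)\bigr]\Bigr].
\]
The inner term is by definition $\mathrm{CE}\bigl(p(\cdot\mid x,D,I),\,q_\theta(\cdot\mid x,D,I)\bigr)$, which gives the claim. As a corollary, subtracting the entropy $H(p(y\mid x,D,I))$, which does not depend on $\theta$, yields $l_\theta=\E_{D,x,I}[\mathrm{KL}(p\,\|\,q_\theta)]+C$. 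Hence minimizers of the loss match the conditional PPD almost everywhere on the support of $p(D,x,I)$.

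The one point needing care is that in training $I$ is randomly masked: $x^*$ and $f^*$ are each dropped with probability $1/2$. I would handle this by treating the mask $m$ as an additional independent random variable and letting $I=I_m$ denote the revealed part. Because $m$ is independent of $w$, the same tower-property argument applies conditionally on each value of $m$, with $p(y\mid x,D,I_m)$ obtained by marginalizing over the hidden components. The loss is then a mixture, weighted by $p(m)$, of the four cross-entropy objectives (none, $x^*$, $f^*$, joint), each of the stated form.

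The main obstacle is measure-theoretic rather than algebraic. The conditional densities need to exist, in particular when conditioning on $f^*$ (a continuous scalar) and on $x^*$ together with $f^*$, and the expectations need to be finite so that Fubini and the tower property apply. I would simply assume that the relevant densities exist with respect to Lebesgue (or Riemann-bin) measure and that $q_\theta$ assigns positive mass everywhere, which the Riemann output distribution guarantees.
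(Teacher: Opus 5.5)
Your proposal is correct and is essentially the paper's proof: factor $p(D,x,y,I)=p(D,x,I)\,p(y\mid x,D,I)$, split the joint integral into an outer expectation over $(D,x,I)$ and an inner one over $y$, and identify the inner term as the cross-entropy $H(p(y\mid x,D,I),q_\theta(y\mid x,D,I))$. Your additional remarks on the random masking of $x^*$ and $f^*$, the KL corollary, and the regularity assumptions are sound; the paper leaves them implicit, and they do not change the argument.
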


\begin{proof}
The proof is analogues to the proof by \citet{muller2021transformers}, where we in addition condition the true posterior on $I$ and the neural network $q_\theta$.
$$l_\theta = - \int_{D,x,y,I} p(x,y,D,I) \log q_\theta (y|x,D,I) = -\int_{D,x,I} p(x,D,I) \int_y p(y|x,D,I) \log q_\theta(y|x,D,I)$$
$$=\int_{D,x,I} p(x,D,I) \, H(p(y|x,D,I),q_\theta(y|x,D,I)) = \E_{(x,D,I) \sim p(x,D,I)}\left[H(p(y|x,D,I),q_\theta(y|x,D,I))\right]$$
where $H(p,q)$ indicates the cross entropy between $p$ and $q$. 
\end{proof}




\section{Pre-computing Approximate GP Data and their Maxima}
\label{app:precompute}


\subsection{GP Approximation}
The Random Fourier Feature (RFF) approximation obtains a sample path, which corresponds to a realization of a Gaussian Process. Such sample paths are represented by a weight vector $w$ in RFF space. To obtain a sample path, one samples $w$. The distribution of $w$ is determined by the kernel hyperparameters. Now, after  fixing $w$, it is possible to evaluate the sample path at any position $x$. For a fixed $w$, it is thus possible to maximize it over the domain to estimate $x^*$ and $f^*$. Note that this is a difficult global optimization problem, which we solve approximately by an ensemble of optimizers that are restarted multiple times. Note that, the extension to the Fully Bayesian GP is natural. For the Fully Bayesian case we have a two-stage procedure: (1) we sample the kernel's hyperparameters, which determines the distribution over $w$. (2) we sample $w$ from this distribution. 

For each setting, we generate millions of approximate samples from the corresponding GP prior (see Appendix~\ref{app:training_prior}) using the Random Fourier Feature (RFF) approximation~\citep{rahimi2007random}. We use 500 RFFs when computing the GP
approximations.

\subsection{Identifying Approximate GP Maximizers} 

Computing the maximum of GP sample paths drawn via RFFs is a non-convex optimization problem. To identify the maximizers, we perform random search followed by first-order gradient-based optimization. First, we describe the optimizer and its hyperparameters, and afterward we describe the ensemble construction. We use either SGD or Adam and a batch of size \verb-num_samples- points over the GP in parallel. The initialization is done either uniformly at random over the domain $[0,1]^d$ (\verb-resample_init- is False) or by trying a large number of points (\verb-num_repeats- times \verb-num_samples- points are tried), computing their function values, and keeping the best \verb-num_samples- (if \verb-resample_init- is True).
Each optimizer is run for \verb-n_iterations_max- iterations. During the optimization, we monitor the current best seen GP value so far and store it. If the current best point does not improve (compared with a tolerance \verb-tol-), we increase a counter indicating the patience, and otherwise the patience counter is reset. After the patience counter reaches a value of \verb-patience-, we decay the learning rate by a factor of \verb-decay_factor-. If the learning rate is decayed more than \verb-max_decays- times, we stop the optimization early. One should take care with points that move outside the domain during optimization, as optima are often located at the edge, especially for higher length scales / dimensions. If \verb-clamp- is True, we always move the points back inside the domain by clamping. If \verb-clamp- is False, we randomly initialize these points. If not specified, we use the default values for the optimizers as specified in PyTorch version 2.3.1. 

\subsection{Maximizer Ensemble Construction} 

We use the first 1000 GP sample paths to build the optimizer ensemble. We build a candidate set of 1000 optimizers with hyperparameters sampled from the grid as defined in Table \ref{tab:gridsearch}. Because we want to determine the maxima for a large number of sample paths, we need to design a small ensemble with a low runtime yet good performance. We measure performance in terms of the regret of the ensemble, that is, the performance of the ensemble compared to the performance of the best optimizer in the candidate set. To reliably estimate the ensemble performance, we use 5-fold cross validation, where a train fold is used for the ensemble building, and the test fold is used for ensemble evaluation. The ensemble is constructed greedily by forward selection, where only candidates are considered that have an average runtime of less than 10 seconds per GP optimization. We keep adding ensemble members until the regret indicates we find the optimum with an approximate precision of 1e-6. We merge the ensembles together over the different training folds to come to a final ensemble. 

\begin{table}[tb]
\caption{Hyperparameter grid values for building the GP Maximizer Ensemble. }
\label{tab:gridsearch}
\centering
\begin{tabular}{@{}lllll@{}}
\toprule
hyperparameter     & grid values               & hyperparameter & grid values             &  \\ \midrule
adam               & {[}True, False{]}         & max\_decays    & {[}1, 5, 10, 15{]}      &  \\
init\_lr           & {[}0.001, 0.01, 0.1, 1{]} & tol            & {[}1e-1, 1e-3, 1e-6{]}  &  \\
resample\_init     & {[}True, False{]}         & decay\_factor  & {[}0.1, 0.5, 0.99, 1{]} &  \\
num\_samples       & {[}50,200,1000{]}         & clamp          & {[}True, False{]}       &  \\
n\_iterations\_max & {[}10, 100, 1000{]}       & patience       & {[}1, 5, 10, 20{]}      &  \\
num\_repeats       & {[}10, 100, 1000{]}       &                &                         &  \\ \bottomrule
\end{tabular}
\end{table}

\section{Training Details}\label{app:training_details}

We follow the same training procedure for training the base and \modelname{} models. As the training objective, we minimize the cross-entropy loss, using AdamW with a batch size of 100 datasets per GPU, a cosine decay learning rate schedule, and linear warmup over the first 1\% of the training run.
The GP sample used for a dataset is selected randomly from the pregenerated samples (see Appendix~\ref{app:precompute}), accounting for 90M GP samples for the fully Bayesian setting~\ref{app:training_prior}, and is possibly reused multiple times. To counteract overfitting, and to encourage symmetry, we perform mirror and reflection augmentations on the domain $A$. Each dataset contains at most 150 points (context plus queries). The context size $C$ (train split) is fixed per batch and sampled uniformly from $[1, 49]$, and the context and query points themselves are drawn from the clustering-based trace generation procedure described in Appendix~\ref{app:clustering}.

We closely follow the architecture and training pipeline of previous PFN works~\citep{muller2021transformers, muller2023pfns4bo}, adopting the transformer architecture from~\citet{hollmann2025accurate} that performs attention both between items and between features. The model is small (approximately 4.7M parameters) with 8 layers, each using an embedding size of 128, 4 attention heads, and 512 units in the hidden expansion layer. We use the regression head proposed by~\citet{muller2021transformers} to model the output distribution. The output distribution is the Full-Bar Distribution (also called Riemann distribution) with Full-Support and 5000 bins.

\subsection{Training Prior}\label{app:training_prior}

We train our models using a Fully Bayesian GP prior. The input dimension $d$ ranges from 1 to 6. We use the squared exponential kernel with an output scale $\sigma_f = 1$. The lengthscale per dimension is sampled independently (corresponding to Automatic Relevance Determination) from $P(D) = \mathrm{LN}\!\left(\mu_0 + \tfrac{\log D}{2}, \sigma_0\right)$ with $\mu_0 = -0.75$ and $\sigma_0 = 0.75$ (this prior was inspired by \citet{hvarfner2024vanilla}). We add zero-mean Gaussian noise with standard deviation $\sigma_n$ sampled from $\mathrm{LN}(-4, 1)$. The mean $\mu$ of the GP is sampled from $\mathcal{N}(0, 0.5)$ per task.

\subsection{Base PFN Training Details}
\label{app:base}

Recall that the base model is trained to condition on $x^*$, $f^*$ or both. During training, using pregenerated $x^*$ and $f^*$ values from GP sample paths (Appendix~\ref{app:precompute}), we randomly sample the conditioning to train the base model: (1) no conditioning, (2) condition on $x^*$, (3) condition on $f^*$, (4) condition on both. We train a single base model to support all four cases. These cases are all equally probable during training. Overall, we train our base model on 100M datasets (25M per GPU across 4 GPUs in data-parallel training).

\subsection{\modelname~Training Details}\label{app:alpha}

The PFN model directly predicting the (expected) information gain closely follows the architecture and training of the base model. The main difference is that no conditioning tokens were used (and it does not take $I$ as input), and that the prediction targets for queries are not $y$, but the oracle information gain from equation~\ref{eq:target}. Note that we trained a \modelname{} for each ES variant. Each \modelname{} is trained on 100M datasets (25M per GPU across 4 GPUs), using similar data augmentation as the base model. 

\subsection{Training Compute Resources}
Pre-computing the GP sample paths and their maximizer (Appendix~\ref{app:precompute}) took approximately 40K CPU hours.
Training the base PFN model (Appendix~\ref{app:base}) took approximately 13 hours of wall-clock time on 4$\times$ NVIDIA H200 GPUs, for a total of about 52 H200 GPU-hours. Each of the three $\alpha$-PFN models (Appendix~\ref{app:alpha}) took approximately 16 hours of wall-clock time on 4$\times$ NVIDIA L40S GPUs, i.e., about 64 L40S GPU-hours per model and 192 L40S GPU-hours across the three acquisition variants.

\section{Bayesian Optimization Experiment Details}\label{app:bo_details}

The initial design consists of $d$ uniformly sampled points. At each BO iteration, the acquisition function is optimized using Botorch routine \texttt{optimize\_acqf}, with the following hyperparameters: $128$ uniform points as initial candidates and $10$ restarts. All runtimes reported in Table~\ref{tab:runtime_transposed} are measured on an AMD EPYC 9654 CPU.
We follow a similar optimization procedure to compute the maximizer of predictive posterior distribution, required for computing the inference regret. 

The standard evaluation measure for methods using information-theoretic acquisition functions is the inference regret, defined as $f(x^{*}) - f(\hat{x}^{*})$, where $\hat{x}^{*}$ is the maximizer of the posterior predictive distribution, i.e., $\hat{x}^{*}=\text{argmax}_{x\in A}\mathbb{E}[q(y|D,x)]$. This maximizer is approximated by performing gradient descent on $q(y|D,x)$, which can be either the surrogate GP or the PFN base-model, with a setup similar to that used for the acquisition function optimization.

We evaluate our method on three families of benchmarks:
\begin{itemize}
    \item \textbf{Synthetic functions.} We use the standard BoTorch test suite~\citep{balandat2020botorch}: Branin (2D), Hartmann (4D and 6D), and Ackley (5D and 8D). Each function is evaluated over 30 seeds with 100 BO iterations, and we report inference regret. We use $\sigma_n = 0.316$ to add noise to the synthetic functions.
    \item \textbf{LCBench surrogates.} We follow the LCBench setup of \citet{hvarfner2025informed}, which provides 7D surrogate tasks built on top of the LCBench~\citep{ZimLin2021a} dataset. We use 5 tasks (\texttt{car}, \texttt{Fashion-MNIST}, \texttt{MiniBooNE}, \texttt{higgs}, \texttt{segment}), each evaluated over 30 seeds with 100 BO iterations, and we report the predicted best $f(\hat{x}^*)$ (computed similarly to the inference regret).
    \item \textbf{HPO-B.} We use the corrected version of HPO-B~\citep{pineda2021hpob} provided by FixedHPO-B~\citep{muller2026fixedhpob}\footnote{\url{https://github.com/SamuelGabriel/FixedHPO-B}}, which fixes the log-scale transformation across the original search spaces. We evaluate on 5 search spaces (5527, 5891, 7609, 5965, 5971) covering 8D to 16D problems, each evaluated over 5 seeds with 50 BO iterations, and we report the average rank of inference regrets across seeds.
\end{itemize}

For the fully Bayesian GP baselines, we use the fork of BoTorch maintained at \url{https://github.com/hvarfner/botorch/tree/pesfizx} (commit \texttt{b625f25c36ab306ec8de9e7d304f93f5be8916f5}), which provides fully Bayesian implementations of PES, MES, and JES. For the acquisition function optimization and inference optimization of \modelname{} models, we rely on the official BoTorch implementation~\citep{balandat2020botorch} (commit \texttt{3ebb38983732c68eeddb26f8f10c0a2e1501972d}).

\section{Synthetic Optimization Trace Generation}
\label{app:clustering}

\subsection{Motivation}
Previous work \citep{muller2021transformers,muller2023pfns4bo,rakotoarison2024context} considered context and query points sampled uniformly. However, in real Bayesian Optimization (BO) traces, the context points follow a structured search pattern, dynamically balancing global and local search and often forming clusters around local optima. Likewise, uniform query points, in high dimensions, are unlikely to be near any of the context points, limiting the opportunity to learn to exploit the information they provide. Ideally, we would use actual optimization traces from the Entropy Search BO procedures. However, this would introduce a dependency on our surrogate model, leading to a chicken-and-egg problem. Alternatively, using BoTorch traces would restrict ourselves to the GP priors it supports. Instead, we propose a simple and efficient synthetic procedure that generates context and query points in a manner that mimics real BO traces.

\subsection{Procedure}\label{app:clustering:procedure}
Our synthetic optimization trace generation procedure, detailed in Algorithm~\ref{algo:cluster}, aims to replicate the characteristics of real BO traces by blending global and local search. The key components are:

\begin{itemize}
\item \textbf{Global search}: Points are sampled uniformly at random within the search space, with an additional probability $\epsilon$ of selecting a point exactly on the edge. This helps in exploring boundary effects which are important in higher dimensions as the optimizer increasingly often lies exactly on the edge.
\item \textbf{Local search}: The next context points are drawn from a Gaussian distribution centered on the best observed context point so far. For query points, we select an arbitrary context point as the center. If the optimum $x^*$ is provided, it is sometimes chosen as the center, which facilitates learning the effect of conditioning on $x^*$.
\item \textbf{Dynamic search adaptation}: BO dynamically transitions from global to local search over time. To model this, we define a local search probability $\alpha_i$ that linearly increases over $L$ steps. This ensures that earlier points explore the space globally, while later points refine the search locally.
\item \textbf{Avoiding duplicate points}: Though not explicitly shown in the pseudocode, we ensure that no duplicate points occur in the trace. This frequently happens in corner regions. If a newly generated point coincides with an existing corner point, it is resampled.
\end{itemize}
This procedure effectively balances exploration and exploitation, producing synthetic traces that resemble real BO optimization trajectories while remaining computationally efficient.

\hide{
\begin{figure}[tbh]
    \centering
    \includegraphics[width=\textwidth]{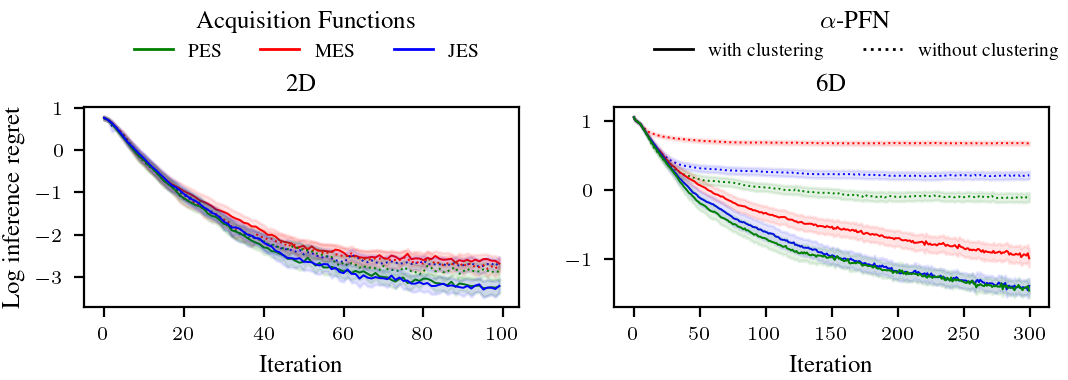}
    \caption{Ablating the BO traces used in training the $\alpha$-PFN. }
    \label{fig:ablation}
\end{figure}
}

\begin{algorithm}
\caption{Generate Optimization Trace\label{algo:cluster}}
\begin{algorithmic}[1]
\STATE \textbf{Inputs:}
\STATE \hspace{1em} $L$: length of the trace
\STATE \hspace{1em} $C$: number of context points (i.e., we have $L-C$ query points)
\STATE \hspace{1em} $d$: dimension of search space
\STATE \hspace{1em} $GP\_sample$: function to evaluate GP values

\STATE \textbf{Outputs:}
\STATE \hspace{1em} $trace$: matrix of shape $(L, d)$ with context/query points in the trace
\STATE \hspace{1em} $y$: vector of length $C$ with function values for context points

\STATE \textbf{Procedure:}
\STATE Initialize $trace \gets$ zero matrix of size $(L, d)$
\STATE Initialize $y \gets$ zero vector of size $C$
\STATE $\epsilon = (1 - u^{\frac{d}{6}})$ with $u \sim U(0,1)$ \COMMENT{Sample edge probability}
\STATE $\sigma \sim \text{LogNormal}(-3, 0.5)$ \COMMENT{Sample local search step size}
\STATE Sample initial / final local search probability:
\STATE \hspace{1em} $\alpha_0 = \min(v_1,v_2,v_3)$ 
\STATE \hspace{1em} $\alpha_L = \max(v_1,v_2,v_3)$ 
\STATE \hspace{1em} with $v_1, v_2, v_3 \sim U(0,1)$
\STATE Start trace from a random point in search space:
\STATE \hspace{1em}  $best\_point = trace[0] \gets clip(w,0,1)$ with $w \sim U^d(-\frac{\epsilon}{2},1+\frac{\epsilon}{2})$
\STATE \hspace{1em} $y_{best} = y[0] \gets GP\_sample(trace[0])$

\FOR{$i = 1$ to $L-1$}
\STATE $\alpha_i \gets \alpha_0 + (\alpha_L - \alpha_0) \cdot (i / L)$ \COMMENT{Determine local search probability}
\STATE $local \gets$ Bernoulli($\alpha_i$) \COMMENT{Determine local or global search}
\IF{$local$}
\STATE \textbf{if} $i < C$ \textbf{then} $inc \gets best\_point$ \COMMENT{Sample near the best point thus far}
\STATE \textbf{else} Choose $inc$ randomly from context points ($trace[:C]$)
\STATE $trace[i] \gets clip(x,0,1)$ with $x \sim \mathcal{N}^d(inc, \sigma^2)$
\ELSE
\STATE $trace[i] \gets clip(x,0,1)$ with $x \sim U^d(-\frac{\epsilon}{2},1+\frac{\epsilon}{2})$ \COMMENT{Global search}
\ENDIF
\STATE \COMMENT{For context point, sample value and update best}
\STATE \textbf{if} $i < C$ \textbf{then} $y[i] \gets GP\_sample(trace[i])$
\STATE \hspace{2em} \textbf{if} $y[i] > y_{best}$ \textbf{then} $best\_point \gets trace[i]$; $y_{best} \gets y[i]$
\ENDFOR

\STATE \textbf{return} $trace, y$
\end{algorithmic}
\end{algorithm}

\subsection{Trace Generation Ablation}

We ablate the trace generation procedure by training \modelname{} either with the clustered traces from Algorithm~\ref{algo:cluster} or with uniformly sampled context and query points. The results in Figure~\ref{fig:clustering_ablation} show that clustered traces become more important as the dimensionality increases. This result suggests that a careful design of the training distribution (of context and query points) is required to avoid complete domain shift at inference time (during Bayesian optimization).

\begin{figure}
    \centering
    \includegraphics[width=\linewidth]{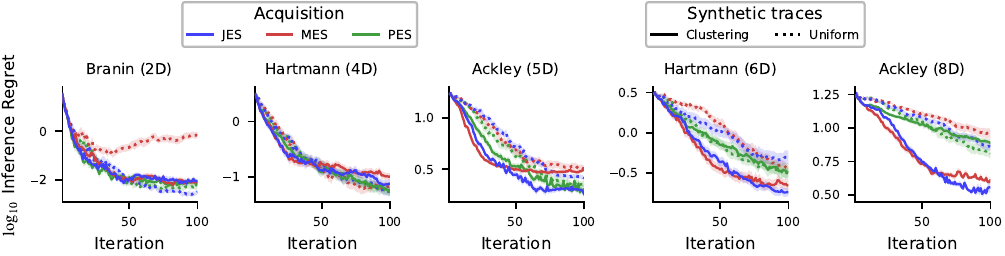}
    \caption{Trace generation ablation comparing clustered traces from Algorithm~\ref{algo:cluster} with uniformly sampled context and query points.}
    \label{fig:clustering_ablation}
\end{figure}

\clearpage
\section{Qualitative Information Gain Comparison}\label{onedimvis}

We compare qualitatively the information gain estimates between GP and PFN. For that, we consider JES for a GP prior with fixed hyperparameters: mean 0, lengthscale 0.05, noise variance 0.01, and output variance 10. This setting deviates significantly from the setting in the main body, as here we do not consider the Fully Bayesian setting. It means new PFN models are trained on this GP prior specifically (1D only). We consider fixed hyperparameters, since hyperpriors are treated very differently in the GP / PFN approaches, making the resulting acquisitions incomparable. Figure~\ref{fig:1d-vis} shows a comparison of GP, Monte Carlo base-PFN, and \modelname{}. Note that Monte Carlo base-PFN uses the base-PFN for conditioning, but averages across (128) MC samples from the posterior over ($x^*$,$f^*$), the exact process that \modelname{} amortizes. We observe that while acquisition peaks are roughly aligned, the conditional base-PFN provides ``richer'' information gain estimates. It also shows that MC-based estimates (for 10 different seeds, each based on 128 different optima samples) are highly variable in regions of interest, noise that \modelname{} is pretrained to model and average out. 

\begin{figure*}
    \centering
    \includegraphics[width=0.75\linewidth]{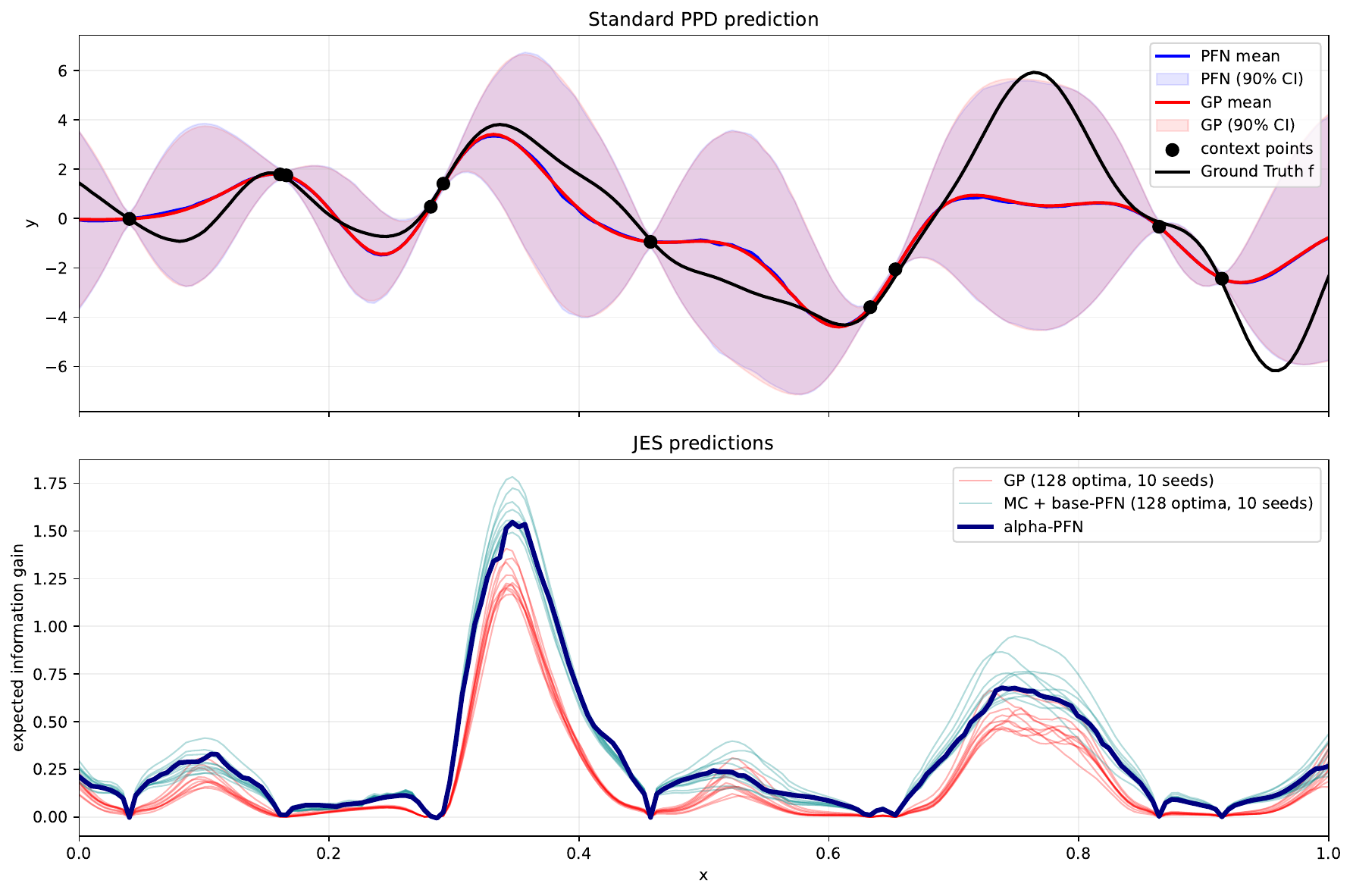}
    \caption{Qualitative comparison of GP, Monte Carlo base-PFN, and \modelname{} estimates for a 1D setting. For this setting, we use a non-fully Bayesian GP (and we trained a base-PFN and \modelname{} specifically for this ablation). This makes the acquisition function of the GP more comparable to the PFN. Top: GP and base-PFN posterior predictive distributions. Bottom: JES estimates, with Monte Carlo curves shown for 10 random seeds. }
    \label{fig:1d-vis}
\end{figure*}

\section{Additional Results}
\label{app:additional_results}

\subsection{Detailed Results}

Figure~\ref{fig:full_results_ei} provides detailed results across all benchmarks, including PES, MES, JES and EI.

\begin{figure}[h]
    \centering
    \begin{subfigure}[]{\linewidth}
      \centering
      \includegraphics[width=\linewidth]{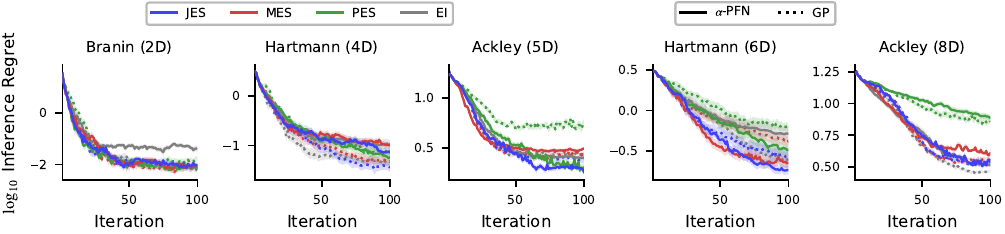}
      \caption{Synthetic functions.}
      \label{fig:full_results_synthetic}
    \end{subfigure}
    
    \vspace{1em}
    
    \begin{subfigure}[]{\linewidth}
      \centering
      \includegraphics[width=\linewidth]{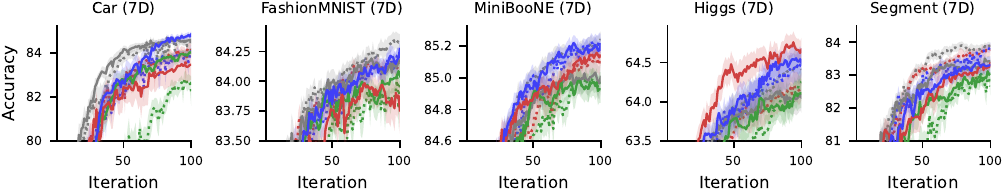}
      \caption{LCBench.}
      \label{fig:full_results_lcbench}
    \end{subfigure}
    
    \vspace{1em}
    
    \begin{subfigure}[]{\linewidth}
      \centering
      \includegraphics[width=\linewidth]{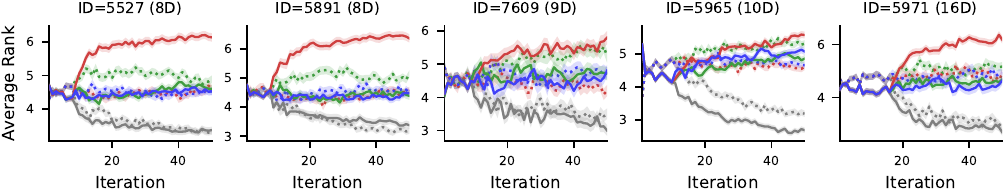}
      \caption{HPO-B.}
      \label{fig:full_results_hpob}
    \end{subfigure}

    \caption{Full results across all benchmarks.}
    \label{fig:full_results_ei}
  \end{figure}

\subsection{Win Rate Comparisons}

Figure~\ref{fig:win_rates} provides win rate comparisons. The win rate measures the fraction of tasks where \modelname{} achieves the better performance compared to the GP fully Bayesian models for each acquisition function variant, offering a complementary view to the average regret and ranking metrics. Except on MES-\modelname{} on HPO-B, \modelname{} achieves competitive performance to the GP fully Bayesian models, which is consistent with the fact that they share the same prior.

\begin{figure}[h]
  \centering
  \begin{subfigure}[]{\linewidth}
    \centering
    \includegraphics[width=\linewidth]{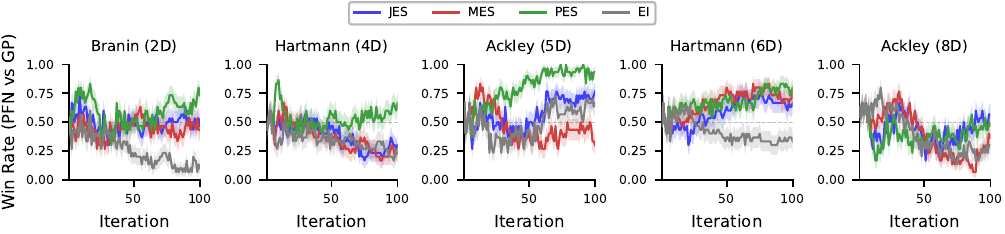}
    \caption{Synthetic functions.}
    \label{fig:win_rates_synthetic}
  \end{subfigure}
  
  \vspace{1em}
  
  \begin{subfigure}[]{\linewidth}
    \centering
    \includegraphics[width=\linewidth]{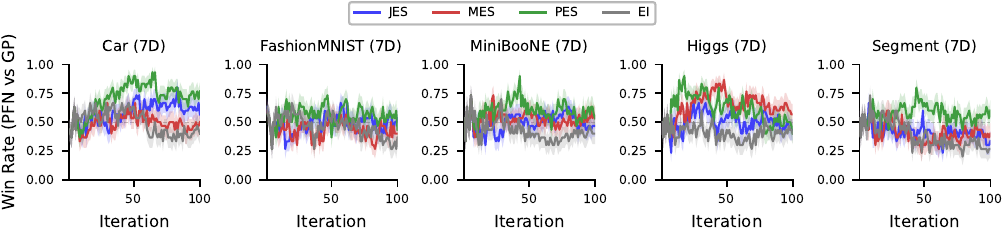}
    \caption{LCBench.}
    \label{fig:win_rates_lcbench}
  \end{subfigure}
  
  \vspace{1em}
  
  \begin{subfigure}[]{\linewidth}
    \centering
    \includegraphics[width=\linewidth]{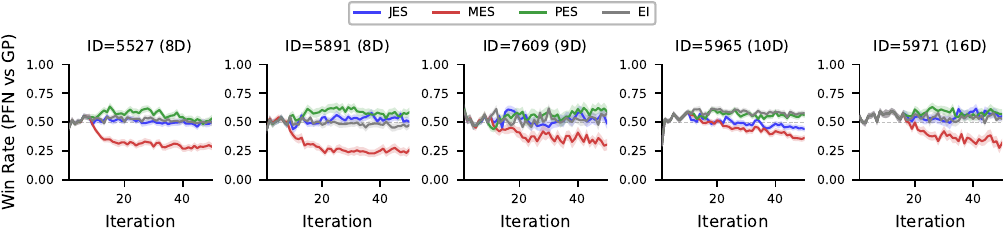}
    \caption{HPO-B.}
    \label{fig:win_rates_hpob}
  \end{subfigure}
  
  \caption{Win rate comparison across different benchmarks.}
  \label{fig:win_rates}
\end{figure}

\newpage
~
\newpage
~
\newpage

\section{Contribution Statement}

We follow the CRediT taxonomy~\citep{allen2014publishing} for describing author contributions \footnote{See \url{https://credit.niso.org/}.}

\begin{itemize}
    \item \textbf{Herilalaina Rakotoarison}: Investigation, Methodology, Software, Writing -- review \& editing, Validation, Visualization.
    
    \item \textbf{Steven Adriaensen}: Conceptualization, Investigation, Methodology, Writing -- original draft, Writing -- review \& editing, Visualization.
    
    \item \textbf{Tom Viering}: Conceptualization, Data curation, Visualization, Writing -- original draft, Writing -- review \& editing, Project administration, Formal analysis.
    
    \item \textbf{Carl Hvarfner}: Investigation, Formal analysis, Software, Writing -- review \& editing, Validation.
    
    \item \textbf{Samuel Müller}: Methodology, Formal analysis, Writing -- review \& editing, Software, Validation.
    
    \item \textbf{Frank Hutter}: Funding acquisition, Supervision, Resources, Project administration.
    
    \item \textbf{Eytan Bakshy}: Supervision, Writing -- review \& editing, Project administration.
\end{itemize}





\end{document}